\newtheorem{theorem}{Theorem}
\newcommand\jy[1]{\textcolor{blue}{#1}}
\newcommand\jh[1]{\textcolor{red}{#1}}
\newcommand\blfootnote[1]{%
  \begingroup
  \renewcommand\thefootnote{}\footnote{#1}%
  \addtocounter{footnote}{-1}%
  \endgroup
}
\title{Cooperative Meta-Learning with Gradient Augmentation}
\author[1]{\href{mailto:<jjongyn@gmail.com>?Subject=Cooperative Meta-Learning with Gradient Augmentation}{Jongyun Shin}{}} 
\author[1]{\href{mailto:<gkstmdwls99@kookmin.ac.kr>?Subject=Cooperative Meta-Learning with Gradient Augmentation}{Seungjin Han}{}}
\author[1]{\href{mailto:<jangho.kim@kookmin.ac.kr>?Subject=Cooperative Meta-Learning with Gradient Augmentation}{Jangho Kim$^\ast$}{}}
\affil[1]{%
    Computer Science Department\\
    Kookmin University\\
    Seoul, Korea
}
\affil[ ]{%
    \url{{whddbs519, gkstmdwls99, jangho.kim}@kookmin.ac.kr}
}
\begin{document}
\maketitle
\blfootnote{$^\ast$ Corresponding Author}

\begin{abstract}
  Model agnostic meta-learning (MAML) is one of the most widely used gradient-based meta-learning, consisting of two optimization loops: an inner loop and outer loop. MAML learns the new task from meta-initialization parameters with an inner update and finds the meta-initialization parameters in the outer loop. In general, the injection of noise into the gradient of the model for augmenting the gradient is one of the widely used regularization methods. In this work, we propose a novel cooperative meta-learning framework dubbed CML which leverages gradient-level regularization with gradient augmentation. We inject learnable noise into the gradient of the model for the model generalization. The key idea of CML is introducing the co-learner which has no inner update but the outer loop update to augment gradients for finding better meta-initialization parameters. Since the co-learner does not update in the inner loop, it can be easily deleted after meta-training. Therefore, CML infers with only meta-learner without additional cost and performance degradation. We demonstrate that CML is easily applicable to gradient-based meta-learning methods and CML leads to increased performance in few-shot regression, few-shot image classification and few-shot node classification tasks. Our codes are available at \url{https://github.com/JJongyn/CML}.
\end{abstract}

\section{Introduction}\label{sec:intro}
Meta-learning, also known as ``learning to learn'', is a methodology to learn a new task by utilizing previous knowledge and experience \citep{vilalta2002perspective}. Model-agnostic meta-learning (MAML) \citep{finn2017model} is one of the dominant gradient-based meta-learning methods \citep{rajeswaran2019meta,rusu2018meta,gupta2020maml}. MAML consists of two optimization loops including an inner loop and an outer loop. The inner loop adapts the model with task-specific knowledge and the outer loop finds the meta-initialization parameters which can quickly adapt the new task knowledge in the inner loop, called task-adaptation. Generally, meta-learning with a few-shot setting involves both meta-training and meta-testing. In meta-training,  a variety of few-shot learning tasks are provided for a meta-learner and the meta-learner should solve an unseen task with few-shot samples in meta-testing. In the process, meta-learner learns the ability to adapt to various tasks, but they are challenged to form meta-initialization parameters with well-generalized knowledge. 

Traditionally, noise injection to the model is widely used for improving the generalization performance of the model. \citet{neelakantan2015adding} finds that adding noise to a network's gradient improves the network's generalization performance. Similarly, \citet{yang2020gradaug} performs gradient augmentation by pruning the model to create multiple sub-networks and using different data augmentations for each input in the sub-networks for inducing the diversity into the gradient, but this requires multiple inferences. They show that injecting noise into gradients plays an important role in improving generalization performance.

\begin{figure*}[h!]
\centering
 \includegraphics[width=0.9\linewidth]{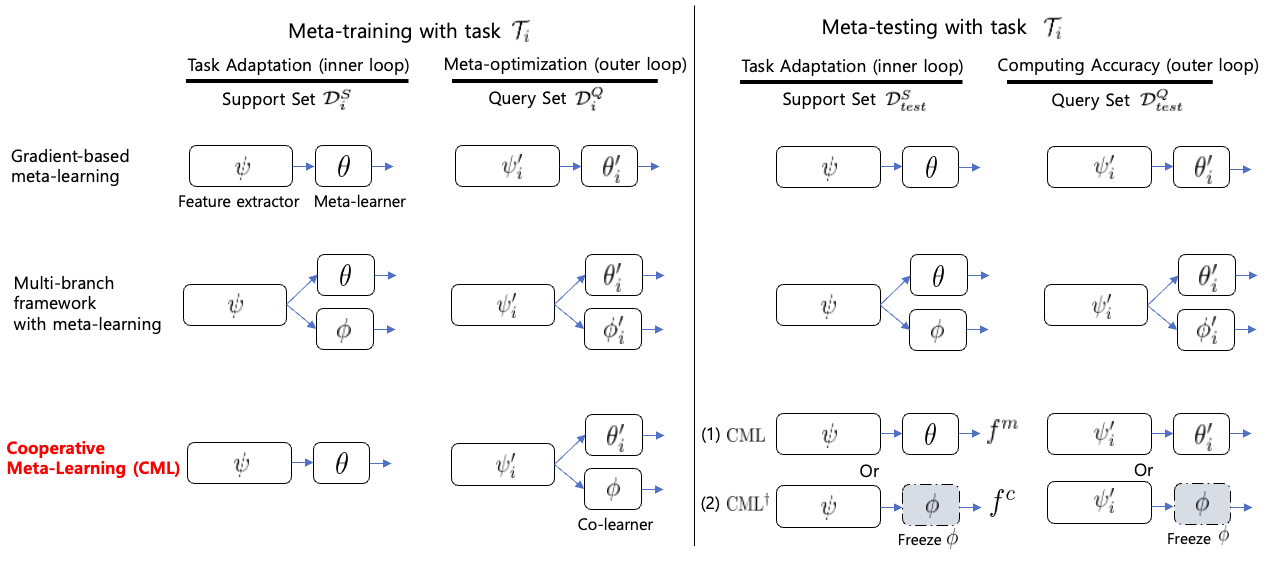}
  \caption{Overall process of CML and comparisons with other methods with a given task ($\mathcal{T}_i$). $\mathcal{\psi}, \mathcal{\theta}$ and $ \phi$ denote meta-initialization parameters of the feature extractor, meta-learner and co-learner. The feature extractor $\psi$ extracts the features, i.e., body layers of DNN. The meta-learner  $\mathcal{\theta}$ and co-learner $\phi$ predict outputs based on the features, i.e., classifier. $\mathcal{\psi}_{i}^{\prime}$, $\mathcal{\theta}_{i}^{\prime}$, and $\mathcal{\phi}_{i}^{\prime}$ means adapted parameters with $i$-task during an inner loop. Since CML does not adapt the co-learner to the task for generalization from gradient augmentation, after meta-training, CML can infer without additional costs. In meta-testing, CML evaluates performance after performing a task-adaptation, like standard MAML having $\mathcal{\psi}$ and $\mathcal{\theta}$. On the other hand, CML$^{\dagger}$ has parameters $\mathcal{\psi}$ and $ \phi$, where only $\mathcal{\psi}$ performs the task-adaptation and then evaluates the performance.}  \label{fig:overall_framework}
\end{figure*}

Motivated by the regularization effect of noise in gradients and the diverse gradient augmentation for the model generalization, we propose a novel cooperative meta-learning (CML) framework. It can be applied with gradient-based meta-learning to find better meta-initialization parameters through a regularization effect but has no additional cost at test time. CML has three parts which are the feature extractor, meta-learner and co-learner. The feature extractor and meta-learner parameters already exist in the original MAML and co-learner is newly introduced in this work for generating the new gradient. The co-learner is a plug-and-play module that takes the features of the feature extractor as input and generates their gradients by backpropagation. Thus, its goal is to provide a gradient for augmentation from a different perspective than the meta-learner, creating an augmented meta-gradient. We think that this is effective as a learned meaningful noise generated by the training of the co-learner rather than simply adding random noise.
To achieve our goal, we design the CML with two purposes: Firstly, the co-learner arouses a different point of view from the naive meta-learner for generalization ability and diversity of meta-gradient. Secondly, the co-learner can be easily deleted at test time without any accuracy drop, which means the co-learner affects only finding meta-initialization parameters not learning a new task.

Figure \ref{fig:overall_framework} shows the overall process of CML and comparisons with other methods such as a naive gradient-based meta-learning and a multi-branch framework with meta-learning. In meta-training, our newly introduced co-learner is only updated in the outer loop, which means the meta-learner solely adapts the new task in the inner loop. Since the co-learner is updated at the previous outer loop, not the current inner loop, the co-learner cooperatively finds the meta-initialization parameters of the shared feature extractor by gradient augmentation in the meta-gradient with a different perspective than the meta-learner. 
Hence, CML does not need to make a sub-network such as pruning and use a different data augmentation with multiple inferences for the diversity. Also, after meta-training, CML can easily delete the co-learner because the co-learner does not change the meta-initialization parameters in the inner loop. Therefore, CML can only infer the feature extractor and meta-learner in meta-testing. Another variation of CML, the co-learner without task-adaptation can be used with the feature extractor which is represented as CML$^\dagger$ in Figure \ref{fig:overall_framework}. Our main contributions are summarized as follows:

\begin{itemize}
    \item We propose the cooperative meta-learning (CML) framework which finds the better meta-initialization parameters without additional cost at test time. Unlike previous regularization methods, our proposed co-learner generates diverse meta-gradient without multiple data augmentation, inference and pruning.
    \item We verify the effectiveness of CML and its applicability, where CML is applied with gradient-based meta-learning methods on various tasks such as few-shot regression, few-shot image classification and few-shot node classification tasks. 
    \item We show that CML's gradient augmentation induces gradient diversity
    and conduct an analysis of the gradient of the co-learner and meta-learner during meta-optimization.
    \item We demonstrate that the performance improvement is not solely attributed to the additional parameters of the co-learner during meta-training, but rather to the framework of CML with meta-training. 
    
\end{itemize}

\section{Related Work}
\label{sec:related}

\subsection{Gradient-based Meta-learning}
In recent, meta-learning successfully covers a diverse application \citep{hospedales2021meta}. Gradient-based meta-learning optimizes a bilevel optimization problem \citep{colson2007overview} where it has a task-adaptation (inner loop) learning a new task with a few shot samples from meta-initialization parameters and meta-optimization (outer loop) finding proper meta-initialization parameters from an inner loop update. Many variants of MAML have been studied in various domains \citep{yin2019meta,obamuyide2019model,collins2022maml,lee2021meta}. BOIL \citep{oh2020boil} tackles the feature reuse problem in meta-optimization and freezes the classifier in task-adaptation. Sharp-MAML \citep{abbas2022sharp} leverages sharpness-aware minimization to solve a bilevel optimization problem. In this work, we propose a new meta-learning framework that can be applied to any gradient-based meta-learning.

\subsection{Multi-branch framework} While maintaining the exact computational graph for inference, many works to boost the performance of the model have been studied. Auxiliary training adds auxiliary classifiers connected in intermediate layers \citep{szegedy2015going,zhang2020auxiliary} and multi-task learning simultaneously learns multiple related tasks and the knowledge from multi-task can be reused by the others \citep{yang2016deep}. Unlike previous methods, multi-branch frameworks \citep{kim2021feature,Xie_2022_CVPR,liang2022camero} shared intermediate layers and split multi-branch under the same task which utilizes knowledge distillation \citep{hinton2015distilling} transferring the knowledge to enhance independent branches. \citet{zhu2018knowledge} split the model into several sub-networks and made an ensemble logit to teach individual sub-networks. \citet{song2018collaborative} introduces multiple heads from the same network to improve the generalization of the model. 

\subsection{Regularization by noise} To improve the generalization performance, various ways to impose constraints on model structure and gradients by noise have been studied. \citet{hinton2002stochastic} uses gaussian gradient noise schedule to train the embedding model. Dropout \citep{srivastava2014dropout} randomly drops the connections during training which introduces the random noise into forward propagation. Similarly, \citet{huang2016deep} randomly disconnects the layers during training. \citet{neelakantan2015adding} shows injecting noise to gradient works very deep architecture. GradAug \citep{yang2020gradaug} generates meaningful noise in gradients rather than random noises by multiple data augmentation and pruning of the model.

Cooperative meta-learning leverages the advantages of both multi-branch framework and regularization by noise in the gradient-based meta-learning domain. CML only introduces a co-learner and trains it in meta-optimization to augment a meta-gradient by sharing the feature extractor. It induces a regularization effect by injecting noise into the meta-gradient without multiple forwarding or making a sub-network such as pruning.

\section{Methodology}
\label{headings}

In this section, we give a brief explanation of the Model-Agnostic Meta-Learning (MAML) and then, we explain a proposed cooperative meta-learning (CML) framework which is an extension of MAML that uses cooperative learning with gradient augmentation to learn meta-initialization parameters of the DNN. In meta-learning, the ability to generalize to a new task is a challenging problem. To solve this problem, we introduce a co-learner that drives the augmentation at the gradient-level regularization.

\subsection{Model-Agnostic Meta-Learning (MAML)} In this work, we divide the DNN model used for meta-learning into two groups: the feature extractor $\psi$ which extracts the features, i.e., body layers of DNN and the meta-learner $\mathcal{\theta}$ predicting outputs based on the features, i.e., classifier. We sample a set of tasks $\{\mathcal{T}\}^N_i$ containing N tasks from the task distribution $p(\mathcal{T})$. DNN model represented by a $f_{(\psi, \theta)}$ is trained using samples from each task $\mathcal{T}_i$ under the two optimization loops. These samples $\mathcal{D}_i$ are divided into support set $\mathcal{D}^S_i$ and query set $\mathcal{D}^Q_i$ which are used in the inner loop and outer loop, respectively. MAML consisting of two optimization loops which are the inner loop and outer loop tries to find well-generalized meta-initialization parameters during meta-training. In the inner loop, we update task-specific parameters from meta-initialization parameters $(\psi, \theta)$ using the support set with an outer step size of $\alpha$.  
\begin{equation}
\label{eq:maml_inner1}
   (\mathcal{\psi}^{\prime}_{i}, \mathcal{\theta}^{\prime}_{i}) = (\mathcal{\psi},\mathcal{\theta}) - \alpha \nabla_{(\psi, \theta)} \mathcal{L}(f_{(\mathcal{\psi}, 
   \mathcal{\theta})};\mathcal{D}_{i}^{S})
\end{equation}
and takes totally $M$-updates for task-specific parameters.
\begin{equation}
\label{eq:maml_inner2}
   (\mathcal{\psi}^{\prime}_{i}, \mathcal{\theta}^{\prime}_{i}) \gets (\mathcal{\psi}^{\prime}_{i},\mathcal{\theta}^{\prime}_{i}) - \alpha \nabla_{(\mathcal{\psi}^{\prime}_{i}, 
   \mathcal{\theta}^{\prime}_{i})} \mathcal{L}(f_{(\mathcal{\psi}^{\prime}_{i}, 
   \mathcal{\theta}^{\prime}_{i})};\mathcal{D}_{i}^{S})
\end{equation}
We will consider one gradient step for the rest for simplification. After task-adaptation in the inner loop, we compute each task loss for the query set with task-specific parameters $(\mathcal{\psi}^{\prime}_{i}, 
   \mathcal{\theta}^{\prime}_{i})$. By summing all task losses, meta-optimization optimizes the following objectiveness: 
\begin{equation}
\begin{split}
\min\limits_{\psi,\theta}\sum\limits_{i}^{N}\mathcal{L}(f_{(\mathcal{\psi}^{\prime}_{i}, 
   \mathcal{\theta}^{\prime}_{i})};\mathcal{D}_{i}^{Q}) 
   = \\ \sum\limits_{i}^{N}\mathcal{L}(f_{(\mathcal{\psi},\mathcal{\theta}) - \alpha \nabla_{(\psi, \theta)} \mathcal{L}(f_{(\mathcal{\psi}, 
   \mathcal{\theta})};\mathcal{D}_{i}^{S})};\mathcal{D}_{i}^{Q})
\label{eq:maml_outer}
\end{split}
\end{equation}
In the outer loop, we update meta-initialization parameters with $N$ task losses using meta-gradient by meta-optimization with an outer step size of $\beta$.
\begin{equation}
   (\mathcal{\psi}, \mathcal{\theta}) \gets (\mathcal{\psi},\mathcal{\theta}) - \beta \nabla_{(\mathcal{\psi}, 
   \mathcal{\theta})} \sum\limits_{i}^{N}\mathcal{L}(f_{(\mathcal{\psi}^{\prime}_{i}, 
   \mathcal{\theta}^{\prime}_{i})};\mathcal{D}_{i}^{Q})
\label{eq:maml_outer2}
\end{equation}
In meta-testing, we verify the trained meta-initialization parameters. The inner loop adapts to the new task with a support set that remains the same as in meta-training. However, the outer loop only computes the accuracy using a query set for each task. There is no meta-optimization process in the outer loop of meta-testing.

\begin{algorithm}[t]

\caption{Cooperative Meta Learning}
\label{algo:cml}
\begin{algorithmic}[1]
\STATE \textbf{[Meta-training]} \\
\STATE \textbf{Input}: Task distribution $p(\mathcal{T})$; Meta-learner model $f^{m}$; Co-learner model $f^{c}$; Step sizes $\alpha, \beta$; Loss scaling factor $\gamma$ ; The number of task in batch: N \\
\STATE \textbf{Output}: Meta-initialization parameters $\mathcal{\psi}$,  $\mathcal{\theta}$, $\mathcal{\phi}$

\STATE Randomly initialize parameters $\mathcal{\psi}$,  $\mathcal{\theta}$, $\mathcal{\phi}$ 
\WHILE{not converged}
\STATE Sample N tasks for batch $\mathcal{T}_{i} \sim p(\mathcal{T})$
\FORALL{$\mathcal{T}_{i}$}
\STATE Sample dataset  $\mathcal{D} = (\mathcal{D}^{S}_{i},\mathcal{D}^{Q}_{i})$ $\;$from $\mathcal{T}_{i}$  
\STATE Update task-specific parameters $(\mathcal{\psi}_{i}^{\prime}, \mathcal{\theta}_{i}^{\prime})$ by Eq.(\ref{eq:cml_inner1})
\ENDFOR
\STATE Intervene co-learner $\mathcal{\phi}$ in meta-optimization step
\STATE Calculate total loss with co-learner by Eq.(\ref{eq:cml_outer1})
\STATE Update meta-initialization parameters $(\mathcal{\psi}, \mathcal{\theta}, \mathcal{\phi})$ with $\beta$ by Eq.(\ref{eq:cml_outer2})
\ENDWHILE
\STATE \textbf{return} $\mathcal{\psi}, \mathcal{\theta}, \mathcal{\phi}$ 

\STATE \textbf{[Meta-testing]} \\
\STATE \textbf{Input}: Sample test dataset $\mathcal{D}_{test} = (\mathcal{D}^{S}_{test},\mathcal{D}^{Q}_{test})$ \\
\STATE Load meta-initialization parameters $\mathcal{\psi}$,  $\mathcal{\theta}$, $\mathcal{\phi}$ 
\FORALL{$\mathcal{D}_{test}$}
\IF {method is “CML”} 
\STATE Update task-specific parameters $(\mathcal{\psi}^{\prime}, \mathcal{\theta}^{\prime})$ for $\mathcal{D}^{S}_{test}$ by Eq.(\ref{eq:maml_testing2})
\STATE Evaluate the model $f^{m}_{\psi^{\prime}, \theta^{\prime}}$ with $\mathcal{D}^{Q}_{test}$
\ENDIF
\IF {method is “CML$^{\dagger}$”} 
\STATE Update task-specific parameters $\mathcal{\psi}^{\prime}$ for $\mathcal{D}^{S}_{test}$ by Eq.(\ref{eq:cml_testing2})
\STATE Evaluate the model $f^{c}_{\psi^{\prime}, \mathcal{\phi}}$ with $\mathcal{D}^{Q}_{test}$
\ENDIF
\ENDFOR
\end{algorithmic}
\end{algorithm}


\subsection{Cooperative Meta-Learning (CML)} \label{method:cml}
Our proposed framework includes an additional module called co-learner $\phi$ inducing gradient augmentation in the meta-optimization. The co-learner can consist of a convolution layer or a fully connected layer, depending on the task. In meta-training, CML performs task-adaptation with the feature extractor and meta-learner in the inner loop same as the original gradient-based meta-learning such as MAML. The co-learner is added to the feature extractor during meta-optimization in the outer loop. Note that the co-learner only intervenes in the outer loop to perform meta-optimization with the feature extractor and meta-learner. In other words, the co-learner does not perform task-adaptation for the current task in the inner loop, therefore, it has implicit knowledge of tasks in the previous sampled batch. As a result, it has a different representation from the naive meta-learner. In this framework, the meta-learner and co-learner always share a representation of the feature extractor. Hence, their gradients are aggregated in the feature extractor, resulting in gradient augmentation. 

\begin{figure*}[t]
\centering
\begin{subfigure}{0.32\textwidth}
    \includegraphics[width=\linewidth]{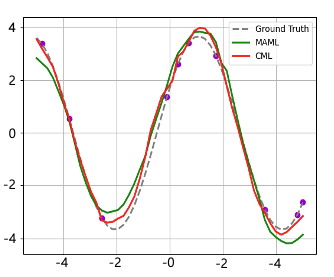}
    \caption{5-shot}
    \label{fig:first}
\end{subfigure}
\hfill
\begin{subfigure}{0.32\textwidth}
    \includegraphics[width=\linewidth]{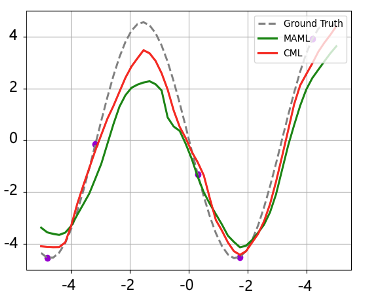}
    \caption{10-shot}
    \label{fig:second}
\end{subfigure}
\hfill
\begin{subfigure}{0.31\textwidth}
    \includegraphics[width=\linewidth]{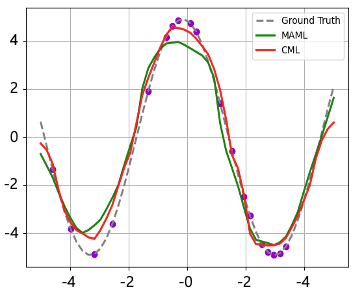}
    \caption{20-shot}
    \label{fig:third}
\end{subfigure}
\caption{Results of MAML and CML on 5,10 and 20-shot of simple regression task.}
\label{fig:reg}
\end{figure*}
Formally, we sample a support set $\mathcal{D}_{i}^{S}$ and query set $\mathcal{D}_{i}^{Q}$ from a new task $\mathcal{T}_{i}$. Also, we denote the initial parameters for the feature extractor, meta-learner and co-learner as $\mathcal{\psi}$, $\mathcal{\theta}$ and $\mathcal{\phi}$ and two models in our framework: model $f^{m}$, which composes of a shared feature extractor and a meta-learner, and $f^{c}$, which composes of a shared feature extractor and a co-learner. In the inner loop, the feature extractor and the meta-learner update the parameters $\psi_{i}^{\prime}$ and $\mathcal{\theta}_{i}^{\prime}$ with $\mathnormal{M}$-updates from a batch of $\mathcal{D}_{i}^{S}$, respectively. However, the co-learner does not update the parameters  $\mathcal{\phi}_{i}^{\prime}$ in the inner loop. Therefore, task-specific parameters $\mathcal{\psi}_{i}^{\prime}$, $\mathcal{\theta}_{i}^{\prime}$ and $\mathcal{\phi}_{i}^{\prime}$ are as follows:
\begin{equation}
\label{eq:cml_inner1}
   (\mathcal{\psi}_{i}^{\prime}, \mathcal{\theta}_{i}^{\prime}) \gets (\mathcal{\psi},\mathcal{\theta}) - \alpha \nabla_{(\mathcal{\psi}, 
   \mathcal{\theta})} \mathcal{L}(f_{(\mathcal{\psi}, 
   \mathcal{\theta})}^{m};\mathcal{D}_{i}^{S}), \quad 
   \mathcal{\phi}^{\prime}_{i} = \mathcal{\phi}
\end{equation} 
where $\alpha$ is an inner step size which is a fixed hyper parameters. Unlike $\psi_{i}^{\prime}$ and $\theta_{i}^{\prime}$ which are updated for the current task $\mathcal{T}_{i}$ in the inner loop, $\mathcal{\phi}_{i}^{\prime}$ keeps the updated parameters from the previously sampled tasks in the outer loop. In the outer loop, our model $f$ updates meta-initialization parameters from $\mathcal{D}_{i}^{Q}$ with task-specific parameters updated by $\mathcal{D}_{i}^{S}$. Our purpose is to converge to $\psi$, $\theta$ and $\phi$ that minimize Eq.(\ref{eq:cml_outer1}) with the loss of the meta-learner and co-learner. 
\begin{equation}
\begin{split}
\label{eq:cml_outer1}
\sum\limits_{i}^{N}\{\mathcal{L}(f_{(\psi^{\prime}_{i},\theta^{\prime}_{i}, \phi_{i})};\mathcal{D}_{i}^{Q})\} =\\ \sum\limits_{i}^{N}\{\mathcal{L}(f_{(\mathcal{\psi}^{\prime}_{i},\mathcal{\theta}^{\prime}_{i})}^{m};\mathcal{D}_{i}^{Q})+\gamma\mathcal{L}(f_{(\mathcal{\psi}^{\prime}_{i}, 
   \mathcal{\phi})}^{c};\mathcal{D}_{i}^{Q})\}
\end{split}
\end{equation}
where $\gamma$ is the loss scaling factor. The feature extractor and meta-learner have task-specific parameters $\mathcal{\psi}_{i}^{\prime}$ and $\mathcal{\theta}_{i}^{\prime}$ with knowledge about the current task $\mathcal{T}_i$, but the co-learner has the parameters $\mathcal{\phi}$ that have been updated by meta-optimization on the previous sampled N tasks. 
\begin{equation}
(\mathcal{\psi}, \mathcal{\theta}, \phi) \gets (\mathcal{\psi},\mathcal{\theta},\phi) - \beta\nabla_{(\mathcal{\psi}, 
   \mathcal{\theta},\phi)} \sum\limits_{i}^{N}\{\mathcal{L}(f_{(\psi^{\prime}_{i},\theta^{\prime}_{i}, \phi_{i})};\mathcal{D}_{i}^{Q})\}
\label{eq:cml_outer2}
\end{equation}
Then, we compute the meta-gradient with $N$ task losses for the query set $\mathcal{D}_{i}^{Q}$. It is created by gradient augmentation, where the gradient noise from the co-learner is added to the existing gradient. From Eq.(\ref{eq:cml_outer2}), we update the meta-initialization parameters with an outer step size of $\beta$ by meta-optimization in the outer loop. The updated $\mathcal{\psi}, \mathcal{\theta}$ and $\phi$ are initialized with meta-initialization parameters for meta-testing. 
Lastly, our framework can infer with CML and CML$^{\dagger}$ using meta-testing dataset $D_{test} = (D_{test}^{S}, D_{test}^{Q})$ in meta-testing phase, and CML performs the task-adaptation as follows:
\begin{equation}
\label{eq:maml_testing2}
   (\mathcal{\psi}^{\prime}, \mathcal{\theta}^{\prime}) \gets (\mathcal{\psi},\mathcal{\theta}) - \alpha \nabla_{(\mathcal{\psi}, 
   \mathcal{\theta})} \mathcal{L}(f_{(\mathcal{\psi}, 
   \mathcal{\theta})}^{m};\mathcal{D}^{S}_{test})
\end{equation}
The model then evaluates against $\mathcal{D}^{Q}_{test}$ by using the adapted parameters $\mathcal{\psi^{\prime}}$ and $\mathcal{\theta^{\prime}}$ like standard MAML. Therefore, it does not require any additional inference cost for the co-learner.

\begin{equation}
\label{eq:cml_testing2}
   \mathcal{\psi}^{\prime} \gets \mathcal{\psi} - \alpha \nabla_{(\mathcal{\psi}, 
   \mathcal{\phi})} \mathcal{L}(f_{(\mathcal{\psi}, 
   \mathcal{\phi})}^{c};\mathcal{D}^{S}_{test})
\end{equation}
On the other hand, in Eq.(\ref{eq:cml_testing2}), CML$^{\dagger}$ performs the task-adaptation only for $\mathcal{\psi}$. Note that $\phi$ of the co-learner does not perform task-adaptation and has existing meta-initialization parameters. Then we evaluate model with $\mathcal{\psi}^{\prime}$ and $\phi$ against $\mathcal{D}^{Q}_{test}$. Our CML algorithm is shown in Algorithm \ref{algo:cml}.

Next, we demonstrate that the gradient calculated from the co-learner converges theoretically when it is combined into meta-gradients.
Our meta-gradient is updated by combining the gradients from the meta-learner($\theta^{\prime}$) and co-learner($\phi$) in the feature extractor($\psi^{\prime}$). 
We represent the loss function of the base network with $\psi^{\prime}$, $\theta^{\prime}$ to be $\mathcal{L}(\psi^{\prime}, \theta^{\prime})$ after the task-adaptation.

\begin{theorem} 
\label{Theorem:loss_minize}
Let the meta-initialization parameters of the base network consisting of $N$ feature extraction layers and the meta-learner as  $\omega = \{\psi^{\prime}_{1}, \cdots , \psi^{\prime}_{N}, \theta^{\prime}\}$. Consider the gradient $G^{(\mathcal{\psi^{\prime}}, 
   \mathcal{\theta^{\prime}})} = \{g^{\mathcal{\psi^{\prime}}}_{1}, \cdots , g_{N}^{\mathcal{\psi^{\prime}}}, g^{\theta^{\prime}}\}$ of the base network computed by the meta-learner in the outer loop
   and the gradient $\bar{G}^{(\mathcal{\psi^{\prime}}, 
   \mathcal{\phi})} = \{\bar{g}^{\mathcal{\psi^{\prime}}}_{1}, \cdots , \bar{g}_{N}^{\mathcal{\psi^{\prime}}}, 0\}$ of the feature extractor computed by the co-learner. A zero value is just for matching the dimension.  
   Let $\bm{\hat{G}}^{(\psi^{\prime}, \theta^{\prime}, \phi)} = G^{(\mathcal{\psi^{\prime}}, 
   \mathcal{\theta^{\prime}})} + \bar{G}^{(\mathcal{\psi^{\prime}}, 
   \mathcal{\phi})} =\{ (g^{\mathcal{\psi^{\prime}}}_{1}+\bar{g}^{\mathcal{\psi^{\prime}}}_{1}), \cdots , (g^{\mathcal{\psi^{\prime}}}_{N}+\bar{g}^{\mathcal{\psi^{\prime}}}_{N}), g^{\theta^{\prime}}\}$ be the gradient of base network by gradient aggregation computed by the loss function $\mathcal{L}(\psi^{\prime}, \theta^{\prime};D^{Q})$.
If $\langle g^{\mathcal{\psi^{\prime}}}_{j},\bar{g}^{\mathcal{\psi^{\prime}}}_{j}\rangle > 0, \ \forall j, (1 \leq j \leq N) $
is satisfied, the direction of the augmented gradient is a descent direction for finding meta-initialization parameters.
\end{theorem}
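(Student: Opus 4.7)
The plan is to show that the augmented gradient $\hat{G}$ makes an acute angle with the true gradient $G$ of the loss $\mathcal{L}(\psi',\theta'; D^{Q})$, since this is exactly what it means for $-\hat{G}$ to be a descent direction in the standard first-order sense. The hypothesis $\langle g_j^{\psi'}, \bar{g}_j^{\psi'}\rangle > 0$ gives us per-layer alignment on the feature-extractor coordinates, and the co-learner contributes a zero component on the meta-learner coordinate, so the two gradients cannot work against each other anywhere.

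Concretely, I would first fix the convention that a direction $d$ is a descent direction for $\mathcal{L}$ at $\omega$ iff $\langle \nabla \mathcal{L}(\omega), d\rangle < 0$; equivalently, $-d$ is descent iff $\langle \nabla \mathcal{L}(\omega), d\rangle > 0$. Under the theorem's notation the true gradient of $\mathcal{L}(\psi', \theta'; D^{Q})$ at $\omega$ is exactly $G^{(\psi', \theta')}$, so the claim reduces to verifying
\begin{equation*}
\bigl\langle G^{(\psi',\theta')},\, \hat{G}^{(\psi',\theta',\phi)}\bigr\rangle > 0.
\end{equation*}

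Next I would expand this inner product block by block using the layer-wise decomposition provided in the statement. Since $\hat{G} = G + \bar{G}$ and the last coordinate of $\bar{G}$ is $0$,
\begin{equation*}
\langle G, \hat{G}\rangle = \|G\|^{2} + \langle G, \bar{G}\rangle = \|G\|^{2} + \sum_{j=1}^{N}\langle g_j^{\psi'}, \bar{g}_j^{\psi'}\rangle + \langle g^{\theta'}, 0\rangle.
\end{equation*}
The zero co-learner component on the meta-learner slot cleanly kills the only term that could have been problematic, leaving only the feature-extractor terms, each of which is strictly positive by assumption. The norm-squared term is nonnegative and is strictly positive away from stationary points, so the whole expression is positive, which is the desired conclusion.

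The only subtlety worth flagging is a minor one: the conclusion genuinely asserts descent only when $G \neq 0$, i.e.\ away from a stationary point of the outer loss; at a stationary point $G=0$, the inequality becomes an equality and there is nothing to descend. I would therefore state the result as ``$-\hat{G}$ is a descent direction whenever $G \neq 0$'' to be rigorous. Beyond that, no real obstacle arises: the argument is a one-line Cauchy--Schwarz-free calculation, and the value of the theorem lies in identifying the per-layer alignment condition $\langle g_j^{\psi'}, \bar{g}_j^{\psi'}\rangle > 0$ as the right sufficient condition justifying gradient augmentation through the shared feature extractor, which motivates the empirical gradient-similarity analyses the paper presents later.
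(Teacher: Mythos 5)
Your proposal is correct and follows essentially the same route as the paper: both arguments reduce the claim to the positivity of $\langle G^{(\psi',\theta')},\hat{G}^{(\psi',\theta',\phi)}\rangle = \|G\|^{2}+\sum_{j}\langle g_j^{\psi'},\bar{g}_j^{\psi'}\rangle$, with the paper then wrapping this in an explicit Taylor expansion to exhibit a step-size range $(0,\bar\alpha)$ on which the loss strictly decreases, while you appeal directly to the first-order definition of a descent direction. Your stationary-point caveat is reasonable but not strictly needed, since the strict hypothesis $\langle g_j^{\psi'},\bar{g}_j^{\psi'}\rangle>0$ for all $j$ already forces $G\neq 0$.
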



\begin{proof}

By Taylor's expansion of the loss function $\mathcal{L}$ for task and the base network of $\omega$ with CML updates: 
\begin{align*}
\mathcal{L}(\omega-\alpha\bm{\hat{G}}^{(\psi^{\prime}, \theta^{\prime}, \phi)})=\mathcal{L}(\omega)-\alpha\nabla\mathcal{L}(\omega)^T\bm{\hat{G}}^{(\psi^{\prime}, \theta^{\prime}, \phi)}+\mathcal{O}(\alpha^2)
\end{align*}
With $\nabla\mathcal{L}(\omega) = {G}^{(\mathcal{\psi^{\prime}}, 
   \mathcal{\theta^{\prime}})}$ and $\lim_{\alpha \to 0} \frac{ |\mathcal{O}(\alpha^2)|}{ \alpha}=0$, there exists $\bar{\alpha} > 0$ such that
\begin{align*}
\frac{ |\mathcal{O}(\alpha^2)|}{ \alpha} < |\langle G^{(\mathcal{\psi^{\prime}}, 
   \mathcal{\theta^{\prime}})},\bm{\hat{G}}^{(\psi^{\prime}, \theta^{\prime}, \phi)}\rangle|, \;\;\; \forall \alpha \in (0,\bar{\alpha})
\end{align*}
we have $|\langle G^{(\mathcal{\psi^{\prime}}, \mathcal{\theta^{\prime}})},\bm{\hat{G}}^{(\psi^{\prime}, \theta^{\prime}, \phi)}\rangle|>0 \  (\because \langle g^{\mathcal{\psi^{\prime}}}_{j},\bar{g}^{\mathcal{\psi^{\prime}}}_{j}\rangle > 0, \ \forall j)$. In this condition, $\mathcal{L}(\omega-\alpha\bm{\hat{G}}^{(\psi^{\prime}, \theta^{\prime}, \phi)})-\mathcal{L}(\omega) < 0 \ \text{and} \ \forall \alpha \in (0,\bar{\alpha})$.
Therefore, CML updates the parameters $\omega$ toward the descent direction in the outer loop.

\end{proof}


\section{Experiments}
\label{exp}
In this section, we apply our CML to various gradient-based meta-learning and evaluate the performance of our framework on few-shot regression, few-shot image classification and few-shot node classification in Section \ref{exp:reg}$\sim$\ref{exp:gnn}.
We also conduct a gradient analysis of the co-learner in our framework, as discussed in Section \ref{exp:4.4_cml_grad}. To confirm the performance improvement from the gradient augmentation effect in CML, not from additional parameters or multi-branch structure, we compare CML with CL, having the same structure, and the naive gradient-based meta-learning in Section \ref{exp:4.5_cml_structure}. To a fair comparison, we follow the original settings of several gradient-based meta-learning algorithms and test them on well-known few-shot benchmarks. More implementation details are in the Appendix. 

\subsection{Few-shot regression} \label{exp:reg}
We evaluate the performance of CML with MAML as a baseline in K-shot sinusoidal regression. The amplitude and phase of the sinusoidal wave follow the ranges of [0.1,5.0] and [0,$\pi$]. Each task consists of datapoints $\mathbf{x}$, $\mathbf{y}$ of a sinusoidal wave. The input $\mathbf{x}$ is uniformly sampled in the range [-5.0,5.0]. The loss function for comparing predicted $\mathbf{y}$ and target $\mathbf{y}$ uses mean-squared error. The baseline consists of 2 hidden layers of size 40 with ReLU nonlinearities, 1 input layer and 1 output layer following \citep{finn2017model}. For CML, the regressor is additionally attached with 1 hidden layer of size 40 with ReLU nonlinearities and 1 output layer as a co-learner. In meta-training, we use K $\in$ $\{$5,10,20$\}$ samples as training examples and train using a batch size of 4, one inner-gradient step, a fixed step size of 0.01 and our loss scaling factor $\gamma$ of 0.2. For meta-testing, we evaluate adaptation with one gradient step for K=5, 10, and 20 test points. Each model predicts the target sinusoidal wave through the given K test points. Furthermore, the co-learner is deleted and it is only evaluated from the feature extractor and meta-learner as the original model. Figure \ref{fig:reg} shows that our CML performs better than MAML for 5, 10, and 20 shots. It means that our framework adapts well to simple networks and shows better generalization performance than the original framework.

\subsection{Few-shot image classification}
\label{exp:image}
We compare the performance of the proposed method on few-shot image classification with several gradient-based meta-learning algorithms including MAML \citep{finn2017model}, MAML++ \citep{antoniou2018train}, BOIL \citep{oh2020boil} and Sharp-MAML \citep{abbas2022sharp}. In this experiment, we evaluate the performance of 5-way 1/5-shot problems on MiniImagenet datasets. In CML, the co-learner uses two convolution layers and a fully connected layer. Our loss scaling factor $\gamma$ is fixed at 0.5 for all methods. We also evaluate the performance of the co-learner. 

\begin{table}
\caption{Test accuracy of 4-conv network with the CML framework on MiniImagenet dataset. The MAML algorithms are from \citep{oh2020boil}. The Sharp-MAML is used for reproduction. The \jy{blue color} and \jh{red color} indicate the output of the meta-learner and co-learner, respectively. Our experiments are performed in 3 runs.}
\centering
\begin{adjustbox}{width=0.9\linewidth}
\begin{tabular}{     l c   c }
\toprule
\multirow{2}{*}{{Method}} & \multicolumn{2}{c}{MiniImagenet 5-way (\%)}\\
& {1-shot} & {5-shot}\\

\midrule
MAML \citep{finn2017model} & {47.44 $\pm$ 0.23} & {61.75 $\pm$ 0.42}  \\ 
MAML + CML  &\jy{49.32 $\pm$ 0.37} & \jy{65.84 $\pm$ 0.46}    \\		
MAML + CML$^\dagger$  & \jh{{50.35 $\pm$ 0.15}} &  \jh{{66.43 $\pm$ 0.43}} \\
\midrule
MAML++ \citep{antoniou2018train} & {52.15 $\pm$ 0.26} & {68.32 $\pm$ 0.44}\\
MAML++ + CML  &\jy{52.46 $\pm$ 0.05} & \jy{70.08 $\pm$ 0.61}   \\
MAML++ + CML$^\dagger$  & \jh{52.86 $\pm$ 0.17} & \jh{{70.69 $\pm$ 0.49}} \\
\midrule
BOIL \citep{oh2020boil} & {49.61 $\pm$ 0.61} & {66.45 $\pm$ 0.37} \\
BOIL + CML  &  \jy{50.04 $\pm$ 0.30} & \jy{66.91 $\pm$ 0.13}   \\
BOIL + CML$^\dagger$ &  \jh{{50.83 $\pm$ 0.25}} & \jh{{67.50 $\pm$ 0.48}} \\
\midrule
Sharp-MAML \citep{abbas2022sharp} & {49.06 $\pm$ 0.52}  & {65.63 $\pm$ 0.54} \\
Sharp-MAML + CML & \jy{49.56 $\pm$ 0.45} & \jy{66.90 $\pm$ 0.20}  \\
Sharp-MAML + CML$^\dagger$ & \jh{{49.70 $\pm$ 0.62}} & \jh{{67.06 $\pm$ 0.16}} \\
\bottomrule
\end{tabular}
\end{adjustbox}
\label{table:mini}
\end{table}
\begin{table}[t]
\centering
\caption{Test accuracy for 5-way 1/5-shot of the MAML and CML framework on the diverse datasets.}
\begin{adjustbox}{width=1\linewidth}
\begin{tabular}{     l c   c c c c c c c}
\toprule
\multirow{2}{*}{Method} & \multicolumn{2}{c}{Omniglot (\%)} & \multicolumn{2}{c}{CIFAR-FS (\%)}
& \multicolumn{2}{c}{FC100 (\%)} & \multicolumn{2}{c}{VGG Flower (\%)}\\
\cmidrule(lr){2-3} \cmidrule(lr){4-5} \cmidrule(lr){6-7} \cmidrule(lr){8-9}
& {1-shot} & {5-shot } & {1-shot} & {5-shot} & {1-shot} & {5-shot} & {1-shot} & {5-shot} \\
\midrule
MAML & {91.78} & {96.59} & {56.55} & {70.10}  & {36.07} & {48.03}  & {63.17} & {74.48}  \\ 
CML  & \textbf{93.99} & \textbf{97.15} & \textbf{57.67} & \textbf{73.87}  & \textbf{36.90} & \textbf{51.06}  & \textbf{64.31} & \textbf{77.03}  \\	
\bottomrule
\end{tabular}
\end{adjustbox}
\label{table:dif_dataset}
\end{table}

\noindent\textbf{Results}\quad Table \ref{table:mini} shows that the proposed methods outperform the original algorithms. Note that CML, which removed the co-learner during meta-testing, improves the performance of the original algorithms. It indicates that the co-learner only performs meta-optimization, which successfully leads it to converge to well-generalized meta-initialization parameters. Specifically, on MAML++ \citep{antoniou2018train}, our framework achieves 70.08\% performance without any additional inference cost in meta-testing. In CML$^{\dagger}$, we infer through the co-learner instead of the meta-learner. CML$^{\dagger}$ outperforms CML because CML$^{\dagger}$ has more parameters.  More interestingly, the co-learner shows high performance without any adaptation. This suggests that our framework has a well-trained feature extractor, and the co-learner plays an important role in achieving this. It looks similar to BOIL \citep{oh2020boil}, but whereas BOIL freezes the meta-learner for representation changes, we introduce a co-learner to take advantage of the gradient augmentation effect of Theorem \ref{Theorem:loss_minize}. In other words, the co-learner provides a gradient augmentation effect to converge the feature extractor with meta-initialization parameters that enable good generalization. The effectiveness of this approach is also demonstrated across different datasets, as shown in Table \ref{table:dif_dataset}.

\begin{table}[t]
\centering
\caption{Results on node classification with CML. The G-Meta and AMM-GNN algorithms are from \citep{tan2022transductive}. Our all experiments are performed 5 runs.}
\begin{adjustbox}{width=1.0\linewidth}
\begin{tabular}{     l c c  c c  c c}
\toprule
\multirow{2}{*}{Method} & \multicolumn{2}{c}{CiteSeer 2-way (\%)} & \multicolumn{2}{c}{Amazon 2-way (\%)} & \multicolumn{2}{c}{CoraFull 5-way (\%)}  \\
\cmidrule(lr){2-3} \cmidrule(lr){4-5} \cmidrule(lr){6-7}
& {1-shot} & {5-shot} & {1-shot} & {5-shot} & {1-shot} & {5-shot}\\

\midrule

G-Meta  & {55.15} & {64.53} & {70.57} & {85.96} & {60.44} & {75.84} \\
G-Meta + CML & \textbf{{61.17}} & \textbf{{76.07}} & \textbf{{72.26}} & \textbf{{87.10}} & \textbf{{60.49}} & \textbf{{76.02}} \\
\midrule

AMM-GNN  & {54.53} & {62.93} & {74.29} & {80.10} & {58.77} & {75.61}\\
AMM-GNN + CML & \textbf{{61.13 }} & {\textbf{66.88}} & \textbf{{78.91}} & \textbf{{86.68}} & \textbf{{63.27}} & \textbf{{76.19}} \\

\bottomrule
\end{tabular}
\end{adjustbox}

\label{table:gnn}
\end{table}

\begin{figure*}[t]
\centering 
\subfloat[\label{fig:gradnoise}Accuracy curves]{{\includegraphics[width=0.25\textwidth ]{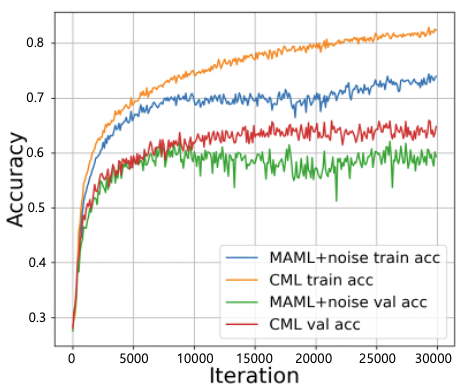}}}
\subfloat[\label{fig:gradsim}Gradient Similarity]{{\includegraphics[width=0.25\textwidth ]{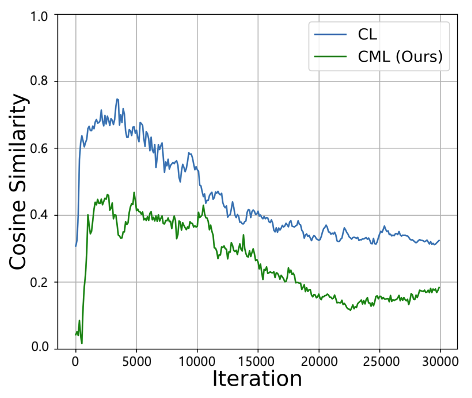}}}
\subfloat[\label{fig:gradnorm}Gradient Norm]{{\includegraphics[width=0.25\textwidth ]{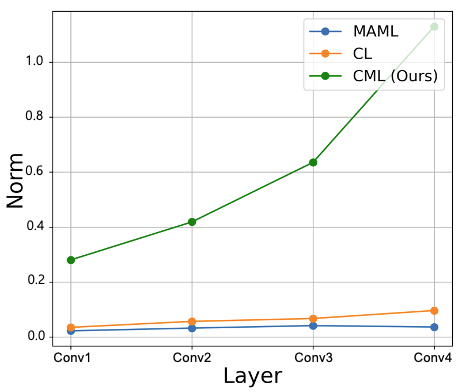} }}
\subfloat[\label{fig:cka}CKA Similarity]{{\includegraphics[width=0.25\textwidth ]{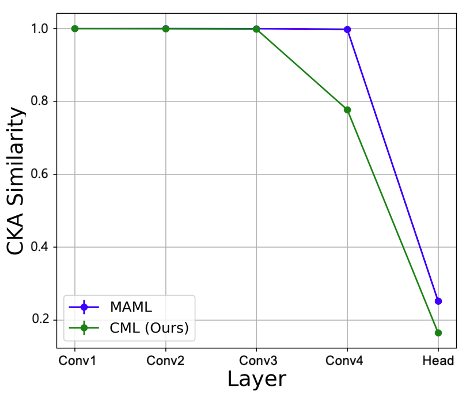}}}

\caption{\textbf{(a)} Accuracy of MAML with random noise and CML. \textbf{(b)} Gradient similarity for the meta-learner and co-learner of the 4th convolution layer. \textbf{(c)} Comparison of gradient norm for the feature extractor in MAML, CL and CML after task-adaptation in the inner loop. At this point, we ignore the effect of bias, because of its negligible impact. \textbf{(d)} CKA Similarity results of representations before and after task-adaptation in the inner loop.} 
\end{figure*}


\subsection{Few-shot node classification}
\label{exp:gnn}
In this experiment, we evaluate CML on a few-shot node classification of graph neural networks (GNNs). Few-shot node classification aims to achieve fast adaptation to new node tasks that are unseen during training. We also define an N-way K-shot problem and consider node tasks $\mathcal{T}_{node}$ which consist of support nodes $\mathcal{D}_{node}^{\mathcal{S}}$ and query nodes  $\mathcal{D}_{node}^{\mathcal{Q}}$. For performance comparison, we use G-Meta \citep{huang2020graph} and AMM-GNN \citep{wang2020graph}, which belong to gradient-based meta learning with GNN, as a baseline and evaluate on the CoraFull, Amazon-Computer and CiteSeer datasets \citep{sen2008collective, shchur2018pitfalls}. We also perform 5-way 1/5-shot and 2-way 1/5-shot, respectively. Our base model follows \citep{tan2022transductive}, using GCN as an encoder of hidden size 16 and a fully connected layer as a meta-learner. We train using the Adam optimizer for a step size of 0.001. Also, we set the inner gradient-steps of 20 with a step size of 0.05. In CML framework, our co-learner additionally includes 1 hidden layer of size 16 and a fully connected layer as the output layer using the loss scaling factor $\gamma$ of 0.2. As shown in Table \ref{table:gnn},  our framework outperforms the baseline method on node classification. In this experiment, the co-learner improves the performance of the encoder and meta-learner, despite having a very simple network structure. It shows that our framework is suitable for solving the few-shot problems and is applicable to various DNN methods related to meta-learning.

\subsection{Gradient augmentation analysis}
\label{exp:4.4_cml_grad}
In this section, we analyze the effect of gradient augmentation by a co-learner. All of our experiments are evaluated on MiniImagenet 5-way 5-shot, and the network structure and experimental settings are as in Section \ref{exp:image}.

\noindent\textbf{Is the gradient of the co-learner really meaningful?}\quad To verify that the gradient of the co-learner is applied as meaningful noise on the meta-gradient, we compare it to MAML with random noise. 
To generate random noise, we introduce a co-learner that does not perform any updating into the MAML (e.g. inner and outer loops). By doing so, the meta-gradient of MAML is updated with a randomized gradient added to the original gradient. For a fair comparison, both models have the same initialization parameters and take the same sampled data as input. Our CML outperforms MAML with random noise and converges to well-generalized parameters much faster, as shown in Figure \ref{fig:gradnoise}. It shows that the gradient of the co-learner influences the meta-gradient with meaningful noise, not just random noise.

\noindent\textbf{The co-learner induces the diversity of the meta-gradient}\quad
\citet{yang2020gradaug} learns a well-generalized full network by inducing gradient diversity with multiple-forwarding of subnetworks. Inspired by this, we perform gradient augmentation on the meta-gradient by updating the proposed co-learner only in the outer loop, unlike the meta-learner, to induce gradient diversity. To demonstrate this, we compare it to Collaborative Learning (CL) \citep{song2018collaborative}, a multi-branch framework approach that does not freeze the co-learner, i.e., CL is like a multi-head framework that updates both the meta-learner and the co-learner. \citet{oh2020boil} shows that the convolution layer before the classifier on task-adaptation is the key to inducing representation change. Based on their findings, we compare CL and CML by computing the gradient similarity of the 4th convolution layer of the feature extractor calculated from each meta-learner and co-learner. From Figure \ref{fig:gradsim}, we observe that CML has a lower gradient similarity between the meta-learner and co-learner in the feature extractor than CL. In general, a value closer to 1 suggests that the patterns and features captured are more similar. Our CML explores more different directions during the optimization process than CL due to its co-learner. It indicates that the co-learner produces a notably more diverse gradient, attributed to the augmentation effect within the meta-gradient. We also show that the gradient similarity of CML is larger than zero, which satisfies the precondition in Theorem \ref{Theorem:loss_minize}.

\noindent\textbf{Effect of the gradient augmentation}\quad In this experiment, we investigate the impact of the augmentation on the meta-gradient. Firstly, we analyze the norm of the gradient for each convolution layer in the feature extractor after task-adaptation in the inner loop. The gradient norm is an important indicator of how much a particular layer affects learning.
Figure \ref{fig:gradnorm} shows the averaged gradient norm of each convolution layer in the feature extractor for MAML, CL and CML. We observe that CL and MAML have very small gradient norms, close to zero on all convolution layers. It indicates that they mostly maintain the existing representation with minimal changes for a new task. However, our framework has relatively larger gradient norms, which indicates that the model is dynamically adapting to new tasks, and there is an amplification of diversity on the meta-gradient from the co-learner.
We also perform a Centered Kernel Alignment (CKA) \citep{kornblith2019similarity} to compare representations similarity before and after adaptation. CKA is one way to compare the similarity of representation and a CKA value close to 1 means that the two representations are similar. Figure \ref{fig:cka} shows the CKA similarity of MAML and CML representations before and after task-adaptation. In MAML, the change in representation occurs only at the head. On the other hand, CML indicates that the representation change occurs in the 4th convolution layer, which also proves that the co-learner in CML induces the representation change at a high level. Thus, our results suggest that a new meta-gradient from the co-learner induces it to learn more task-specific features.

\begin{table}[t]
\centering
\caption{Number of parameters and test accuracy on MiniImagenet 5-way 1/5-shot. CML and CL use the MAML framework as a baseline. The "$\star$" and "$\dagger$" indicate the output of the meta-learner and co-learner, respectively. All experiments are performed in 3 runs.}
\begin{adjustbox}{width=0.9\linewidth}
\begin{tabular}{     l c c |  c c}
\toprule
\multirow{2}{*}{Method} & \multicolumn{2}{c|}{Parameters \#} & \multicolumn{2}{c}{MiniImagenet 5-way (\%)}\\
& Train & Test & {1-shot} & {5-shot}\\
\midrule
MAML  & 129K & 129K &  {47.44 $\pm$ 0.23} & {61.75 $\pm$ 0.42} \\
 More-MAML  & 232K & 232K & {48.48 $\pm$ 0.60} & {62.53 $\pm$ 0.12} \\
 CL  & 203K & 203K & {47.57 $\pm$ 0.15} & {62.36 $\pm$ 1.02} \\ 
 \midrule
 \multicolumn{5}{l}{\textbf{\textit{(1) Comparison to the meta-learner}}} \\
 CL$\star$  & 203K & 129K & {47.45 $\pm$ 0.13} & {61.60 $\pm$ 1.39} \\ 
 CML (Ours) & 203K & 129K & \textbf{{49.32 $\pm$ 0.37}} & \textbf{65.84 $\pm$ 0.46} \\
 \midrule
 \multicolumn{5}{l}{\textbf{\textit{(2) Comparison to the co-learner}}} \\
 CL$^\dagger$  & 203K & 195K & {48.45 $\pm$ 0.40} & {62.50 $\pm$ 0.62} \\
 CL$^\dagger$ w/o adaptation & 203K & 195K & { 20.66 $\pm$ 0.38 } & { 20.54 $\pm$ 2.61 } \\
 CML$^\dagger$ (Ours) & 203K & 195K & \textbf{{50.35 $\pm$ 0.15}} &  \textbf{{66.43 $\pm$ 0.43}} \\
\bottomrule
\end{tabular}
\end{adjustbox}
\label{table:more_param}
\end{table}

\subsection{Efficiency analysis of the CML structure}
\label{exp:4.5_cml_structure}

In this section, we conduct an experiment to justify the validity of our framework's structure. Our framework requires more parameters during meta-training due to the addition of the co-learner. Therefore, we compare the parameter sizes of CML, CL and MAML with more parameters, called More-MAML, to demonstrate that our framework does not simply improve performance by having more parameters. In this experiment, CML and CL follow the same network architecture as Section \ref{exp:image}, while More-MAML has additional convolution layers. From Table \ref{table:more_param}, we can see that More-MAML, CL and CML have 232K, 203K and 203K parameters, respectively, during meta-training. Note that the CML has fewer parameter sizes than More-MAML and CL in meta-testing, but shows better performance on MiniImagenet datasets. Also in (1), CL{$_\star$} shows a performance degradation when inferring with a meta-learner like CML. In setting (2), both CL$^{\dagger}$ and CML$^{\dagger}$ use the co-learner to evaluate performance. We observe that without performing adaptation, CL$^{\dagger}$ leads to a deterioration in the model's inferential capabilities. It emphasizes that adaptation is essential in the general case, and that our approach has a uniquely structured framework. Notably, although our co-learner does not perform task-adaptation during meta-testing in CML$^\dagger$, it outperforms models with a similar number of parameters while achieving the highest accuracy. In this experiment, our findings highlight that having more parameters in meta-learning does not necessarily lead to improved performance, while our framework demonstrates an effective learning framework to address this limitation.

\begin{table}[t]
\centering
\caption{Ablation study of the loss scaling factor.}
\begin{adjustbox}{width=0.7\linewidth}
\begin{tabular}{  c  c  c}
\toprule
\multirow{2}{*}{Loss scaling factor ($\gamma$)} & \multicolumn{2}{c}{MiniImagenet 5-way (\%)} \\ 
& \multicolumn{1}{c}{1-shot}  & \multicolumn{1}{c}{5-shot}\\

\midrule
 {0.2} & {49.07} & {64.39} \\ 
 {0.5} & \textbf{49.61} & {65.53}   \\		
 {0.8} & {49.07} & {64.73} \\
 {1.0} & {49.32} & \textbf{65.84} \\
\bottomrule
\end{tabular}
\end{adjustbox}
\label{table:ablation}
\end{table}


\subsection{Ablation study}
\label{exp:4.6_ablation}
\noindent\textbf{Update scheme for loss scaling factor}\quad The proposed method has parameters $\gamma$ for the influence of the co-learner on the feature extractor. To verify the effect of this influence, we conduct experiments for 5-way 1/5-shot on MiniImagenet datasets. From Table \ref{table:ablation}, we show that our method has higher performance than conventional MAML regardless of $\gamma$. In particular, 1 shot and 5 shot achieve the highest performance at 0.5 and 1.0, respectively. This result shows that our method is robust against $\gamma$ and suggests that the intervention of the co-learner is important.

\section{Conclusion and Discussion}
In this paper, we propose a novel training framework called Cooperative Meta-Learning (CML). The main idea of our framework is that the proposed co-learner in meta-training generates a gradient augmentation effect. To achieve this, we design the co-learner so that it only updates in the outer loop and can be easily deleted in meta-testing. Our experiments demonstrate that our co-learner generates meaningful gradients, which leads to diversity on the meta-gradient, and this guides the learning direction to better meta-initialization parameters. It also shows that the diversity of the meta-gradient is a key factor in its strong generalization ability in the few-shot problem.

\begin{acknowledgements} 
This work was carried out with the support of "Cooperative Research Program for Agriculture Science and Technology Development (Project No. RS-2024-00332198 )" Rural Development Administration, Republic of Korea.
\end{acknowledgements}

\bibliography{uai2024-template}

\newpage

\onecolumn


\title{Cooperative Meta-Learning with Gradient Augmentation\\(Supplementary Material)}
\maketitle

\appendix
\section{Implementation details}
\subsection{Image classification}
\label{appendix:image_details}
In our experiments, BOIL \citep{oh2020boil} and MAML++ \citep{antoniou2018train} demonstrate results that are highly consistent with the original papers, thus reporting the original paper results \footnote{\url{https://github.com/jhoon-oh/BOIL}} \footnote{\url{https://github.com/AntreasAntoniou/HowToTrainYourMAMLPytorch}}. In addition, the MAML \citep{finn2017model} follows the experiments in \citep{oh2020boil}. However, in the case of Sharp-MAML, the results obtained using the official code in the same experimental setup differed from the original paper. Therefore, we report the experimental results based on our execution following the official code \footnote{\url{https://github.com/mominabbass/Sharp-MAML}}.

\noindent \textbf{Architecture} we used the 4-conv network model, following \citep{finn2017model}. In detail, the model contains four 3 $\times$ 3 convolution layers with batch normalization, a ReLU nonLinearity and 2 $\times$ 2 max-pooling and a fully connected layer. CML additionally includes two 3 $\times$ 3 convolution layers and a fully connected layer as a co-learner.

\noindent \textbf{Experimental settings} We basically follow the original settings for each algorithm. For task-adaption, We follow the original settings: 5 inner-gradient steps on Sharp-MAML and MAML++ and 1 inner-gradient step on the rest following \citep{oh2020boil}. In CML framework, we train using loss scaling of $\gamma$ = 1. We perform 3 runs and report all our results from the model with the best validation accuracy. We used the Pytorch framework and GeForce RTX 3090 for all experiments. 

\noindent \textbf{Datasets} We evaluate our method on the following benchmark datasets. \textbf{MiniImagenet} contains 60000 images with 100 classes and 600 images size of 84 $\times$ 84 for each class. \textbf{Omniglot} contains 32,460 images size of 28 $\times$ 28 of handwritten characters with 1,623 different characters from 50 alphabets. \textbf{CIFAR-FS} is randomly sampled based on CIFAR-100 and it contains 600 images size of 32 $\times$32 with 100 classes. \textbf{FC-100} is also a split dataset from CIFAR-100 that contains 600 images size of 32 $\times$ 32 with 100 classes. \textbf{VGG-Flower} contains 258 images size of 32 $\times$ 32 for each class.
\begin{table}[h]
\centering
\caption{Statistics datasets }
\begin{adjustbox}{width=0.7\linewidth}
\begin{tabular}{    c | c c c c }
\toprule
\multirow{1}{*}{Dataset} & \multicolumn{1}{c}{Nodes \#} & \multicolumn{1}{c}{Edge \#} & \multicolumn{1}{c}{Features \#} & \multicolumn{1}{c}{Class split (train / validation / test)} \\

\midrule
CoraFull  & 19,793  & 63,421  & 8,710 & 40 / 15 / 15  \\
Amazon-Computer  & 13,752  & 245,861  & 767 & 4 / 3 / 3  \\
CiteSeer & 3,327  & 4,552  & 3,703 & 20 / 10 / 10  \\
\bottomrule
\end{tabular}
\end{adjustbox}

\label{table:datasets_node}
\end{table}
\subsection{Node classification}
We perform our experiments with the same environment from the official code \footnote{\url{https://github.com/Zhen-Tan-dmml/TLP-FSNC}} of \citep{tan2022transductive}. We experiment on the CoraFull, Amazon-Computer and CiteSeer datasets from Table \ref{table:datasets_node}, which are widely used in node classification.

\section{Comparison of the meta-learner and co-learner for the same parameters}

\begin{table}[h]
\centering
\caption{Results on the performance of the meta-learner and co-learner with the same parameters. Our CML framework uses MAML as a baseline with a shared feature extractor, meta-learner, and co-learner. The "$\ast$" indicates that the model failed to converge.}
\begin{tabular}{ c | c | c c}
\toprule
\multirow{2}{*}{Structure} & \multirow{2}{*}{Learner} & \multicolumn{2}{c}{MiniImagenet 5-way (\%)} \\
& & 1-shot & 5-shot \\
\midrule
\multirow{2}{*}{Conv(0)}
& Meta-learner & \textbf{{49.52 $\pm$ 0.41}} & \textbf{{65.82 $\pm$ 0.55}}   \\ 
& Co-learner & {49.16 $\pm$ 0.47} & {65.13 $\pm$ 0.27} \\
\midrule
\multirow{2}{*}{Conv(2)}
& Meta-learner & \textbf{{48.06 $\pm$ 0.97}} & \textbf{{66.20 $\pm$ 0.44}}   \\ 
& Co-learner & $\ast$  & {65.44 $\pm$ 0.27} \\
\hline
\end{tabular}
\label{table:meta_learner_same_capacity}
\end{table}

We conduct experiments on the performance of the meta-learner and co-learner with the same capacity. In this experiment, our feature extractor has the four convolution layers as shown in Section 4.2, and the co-learner and meta-learner are evaluated on the same structure, Conv(0) with no convolution layer and only a fully connected layer, and Conv(2) with two convolution layers and a fully connected layer. From Table \ref{table:meta_learner_same_capacity}, we observe that the meta-learner that performs task-adaptation in meta-testing achieves higher performance than the co-learner. It can be seen that our feature extractor already has good performance during meta-training, rather than the co-learner having good performance despite not performing task-adaptation. Therefore, the co-learner assists the learning of the feature extractor during meta-training and guides it to converge in a good direction. However, we find that our co-learner fails to converge on the 1-shot problem with the Conv(2) structure. This suggests that we need to empirically evaluate the optimized structure of the co-learner based on the network architecture.

\section{Ablation study on the number of Conv-layer in the co-learner}

\begin{table}[h]
\centering
\caption{Test Accuracy(\%) by number of convolution layer on MiniImagenet 5-way 1-shot. We use MAML as a baseline and follow the experimental settings in Section 4.2.}
\begin{adjustbox}{width=0.6\linewidth}
\begin{tabular}{    c | c c c c c}
\toprule
Conv layer (\#) & Conv(0) & Conv(1) & Conv(2) & Conv(3) & Conv(4) \\
\midrule
CML & \textbf{{49.52 $\pm$ 0.41}} & {49.39 $\pm$ 0.51} & 49.32 $\pm$ 0.37 & {48.85 $\pm$ 0.28} & {49.36 $\pm$ 0.23} \\
\, CML$^{\dagger}$ & {49.16 $\pm$ 0.47} &50.21 $\pm$ 0.60 & \textbf{{50.35 $\pm$ 0.15}} & {49.82 $\pm$ 0.29} & {50.17 $\pm$ 0.28} \\
\bottomrule
\end{tabular}
\end{adjustbox}
\label{table:num_co-learner}
\end{table}

We explore the impact for the structure of the co-learner in our CML framework. Table \ref{table:num_co-learner} shows that all models outperform the performance of standard MAML. In particular, Conv(0), which has only a fully connected layer without a convolution layer, achieved the highest performance. It clearly shows that our learning framework is effective in leading convergence to well-generalized meta-initialization parameters. Also, the co-learner in the Conv(2) model with two convolution layers achieves the highest accuracy of 50.35\%.

\section{CML TO A LARGER NETWORK}
We run experiments on CIFAR-FS, VGG-Flower, and FC-100 on a larger network. Resnet12 \citep{oreshkin2018tadam} network.The findings demonstrate that CML enhances the performance of MAML, even with larger backbone architectures. This improvement can be attributed to the enhanced representational ability of the feature extractor facilitated by the co-learner, irrespective of the backbone network size. 
\begin{table}[h]
\centering
\caption{Test accuracy of Resnet12 network with the CML framework on CIFAR-FS, VGG Flower, and FC100 dataset.}
\begin{adjustbox}{width=0.5\linewidth}
\begin{tabular}{     l c c c c c c}
\toprule
\multirow{2}{*}{Method} & \multicolumn{2}{c}{CIFAR-FS (\%)} & \multicolumn{2}{c}{VGG Flower (\%)}
& \multicolumn{2}{c}{FC100 (\%)}\\
\cmidrule(lr){2-3} \cmidrule(lr){4-5} \cmidrule(lr){6-7} 
& {1-shot} & {5-shot } & {1-shot} & {5-shot} & {1-shot} & {5-shot} \\
\midrule
MAML & {61.86} & {73.32} & {63.43} & {75.42} & {36.61} & {47.48}   \\ 
CML  & \textbf{62.11} & \textbf{78.31} & \textbf{66.07} & \textbf{81.15}  & \textbf{37.56} & \textbf{51.51}  \\	
\bottomrule
\end{tabular}
\end{adjustbox}
\label{table:dif_dataset}
\end{table}


\section{VISUALIZATION OF CML BY T-SNE}

\begin{figure}[h]
\centering
\subfloat[\label{fig:tsne_maml}MAML]{{\includegraphics[width=0.5\columnwidth ]{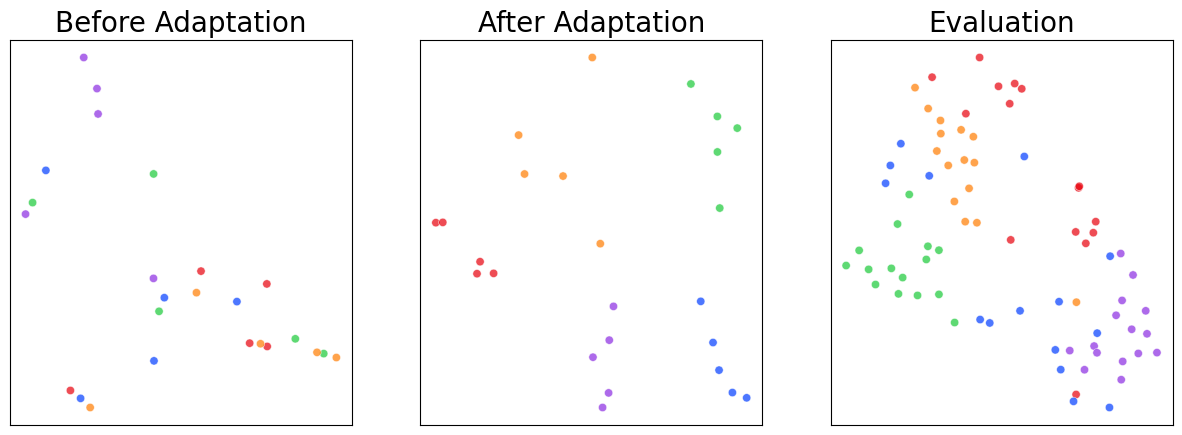}}}
\\
\subfloat[\label{fig:tsne_cml}CML]{{\includegraphics[width=0.5\columnwidth ]{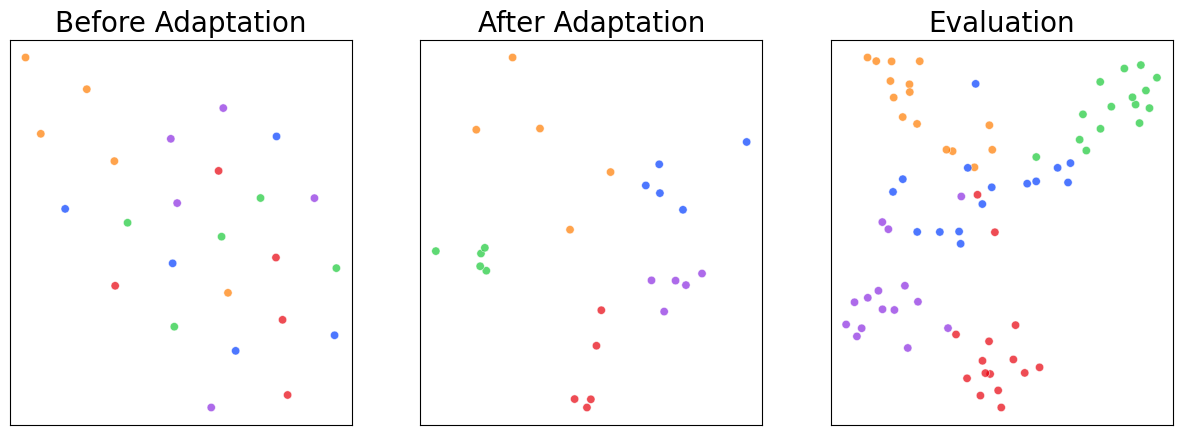} }}
\caption{t-SNE of (a) MAML and (b) CML on trained miniimagenet. We perform the adaptation with the support set and then evaluate the method with the query set.}
\label{fig:tnse}
\end{figure}

T-SNE \citep{van2008visualizing} is a typical dimension reduction technique that maps high-dimensional data into a lower-dimensional space. It allows us to assess the similarity of data points before and after adaptation. We experiment with T-SNE for MAML and CML without the co-learner with the same parameters at inference time. In Table \ref{fig:tnse}, our CML shows that the ability to form more consistent and distinct clusters than MAML. It can be seen that the intervention of the co-learner attached to the CML produces a gradient augmentation effect, which is attributed to better generalization performance.

\end{document}